\newcommand{\citewithauthor}[1]{\citeauthor{#1} \cite{#1}}
\newcommand\convert[1]{\pgfmathprintnumber{#1}}
\newtheorem{theorem}{Theorem}[section]
\newtheorem{lemma}[theorem]{\TE{Lemma}}
\algnewcommand{\LineComment}[1]{\State \(\triangleright\) #1}
\newcommand*{\colorboxed}{}
\def\colorboxed#1#{%
  \colorboxedAux{#1}%
}
\newcommand*{\colorboxedAux}[3]{%
  \begingroup
    \colorlet{cb@saved}{.}%
    \color#1{#2}%
    \boxed{%
      \color{cb@saved}%
      #3%
    }%
  \endgroup
}
\def\Eqref Eq:#1:{\eqref{eq:#1}}
\newcommand{\TE}[1]{\textbf{#1}}
\newcommand{\FPP}[2]{\frac{\partial{#1}}{\partial{#2}}}
\newcommand{\FPPR}[2]{{\partial{#1}}/{\partial{#2}}}
\newcommand{\TWO}[2]{\left(\setlength{\arraycolsep}{1pt}\begin{array}{cc}{#1}, & {#2}\end{array}\right)}
\newcommand{\TWOC}[2]{\left(\setlength{\arraycolsep}{1pt}\begin{array}{c}#1 \\ #2\end{array}\right)}
\newcommand{\THREE}[3]{\left(\setlength{\arraycolsep}{1pt}\begin{array}{ccc}{#1}, & {#2}, & {#3}\end{array}\right)}
\newcommand{\FOURC}[4]{\left(\setlength{\arraycolsep}{1pt}\begin{array}{c}#1 \\ #2 \\ #3 \\ #4\end{array}\right)}
\newcommand{\fmin}[1]{\underset{#1}{\min}}
\newcommand{\fmax}[1]{\underset{#1}{\max}}
\newcommand{\argmin}[1]{\underset{#1}{\text{argmin}}}
\newcommand{\argminP}[1]{\text{argmin}}
\newcommand{\argmax}[1]{\underset{#1}{\text{argmax}}}
\newcommand{\argmaxP}[1]{\text{argmax}}
\newcommand{\ST}{\text{s.t.}}
\newcommand{\proofread}[1]{}
\newif\ifArxiv
\definecolor{Blue}{rgb}{0.2, 0.2, 0.8}
\newcommand\fs@ruled@notop{\def\@fs@cfont{\bfseries}\let\@fs@capt\floatc@ruled
  \def\@fs@pre{}%
  \def\@fs@post{\kern2pt\hrule\relax}%
  \def\@fs@mid{\kern2pt\hrule\kern2pt}%
  \let\@fs@iftopcapt\iftrue}
\renewcommand\fst@algorithm{\fs@ruled@notop}
\title{\large\bf Planning of Power Grasps Using Infinite Program Under Complementary Constraints \vspace{-15px}}
\author{Zherong Pan$^1$, Duo Zhang$^2$, Changhe Tu$^2$, Xifeng Gao$^3$\\
\vspace{-15px}
\thanks{\footnotesize{$^1$ Lightspeed \& Quantum Studio, Tencent America. $\{$zherong.pan.usa, gxf.xisha$\}$@gmail.com. $^2$Department of Computer Science, Shandong University. galenzhang@mail.sdu.edu.cn. Duo Zhang and Changhe Tu are supported in part by NSFC No. 61772318.}}
}
\begin{document}
\maketitle
\thispagestyle{empty}
\pagestyle{empty}

\newif\ifdetail
\detailfalse

\begin{abstract}
We propose an optimization-based approach to plan power grasps. Central to our method is a reformulation of grasp planning as an infinite program under complementary constraints (IPCC), which allows contacts to happen between arbitrary pairs of points on the object and the robot gripper. We show that IPCC can be reduced to a conventional finite-dimensional nonlinear program (NLP) using a kernel-integral relaxation. Moreover, the values and Jacobian matrices of the kernel-integral can be evaluated efficiently using a modified Fast Multipole Method (FMM). We further guarantee that the planned grasps are collision-free using primal barrier penalties. We demonstrate the effectiveness, robustness, and efficiency of our grasp planner on a row of challenging 3D objects and high-DOF grippers, such as Barrett Hand and Shadow Hand, where our method achieves superior grasp qualities over competitors.
\end{abstract}

\section{Introduction}
Grasp planning remains a fundamental and perennial problem, although intense research efforts have been invested over the past decades. A vast majority of prior works view grasp planning as a non-smooth, noise-corrupted search problem, and rely on model-free stochastic optimizations, such as simulated annealing \cite{ciocarlie2007dexterous}, Bayesian optimization \cite{nogueira2016unscented}, and multi-armed bandits \cite{laskey2015multi}, to optimize the grasp quality. Although these methods make minimal assumptions on geometries of objects and kinematics of grippers, they are typically sample-intensive. Instead, several relatively recent works \cite{fontanals2014integrated,wang2019manipulation,Liu2020DeepDG} demonstrate the advantage of model-based approaches in terms of fast convergence \cite{Dai2018}, amenability to machine learning \cite{Liu2020DeepDG}, and global optimality \cite{Liu2020MICP,schulman2017grasping,6906885}. Model-based approaches utilize certain properties of grasp metrics, object shapes, or gripper types, such as derivatives \cite{Liu2020DeepDG}, submodularity \cite{schulman2017grasping}, and monotonicity \cite{6906885}, to guide the search of optimal grasps and achieve improved efficacy.

Despite their various advantages, model-based approaches are relatively less used due to a limited robustness and generality in several ways. Most model-based algorithms \cite{Liu2020DeepDG,Liu2020MICP,schulman2017grasping,6906885} are limited to precision grasps by pre-sampling a small set of contact points either on the gripper or the object. In comparison, model-free, sampling-based approaches are agnostic to contact points and can easily handle power grasps. Moreover, some model-based approaches \cite{schulman2017grasping,pan2020grasping} only plan grasp points without considering gripper fesasibility. Other methods \cite{6906885,Liu2020MICP,Dai2018} can account for gripper kinematics, but they either resort to model-free sampling-based method \cite{6906885}, require a long computational time \cite{Liu2020MICP}, or cannot handle complex object shapes \cite{Dai2018,Liu2020MICP}.

If we switch gears from grasp planning to general contact-rich path planning, there has been numerous efforts to sidestep the above limitations. In particular, contact-implicit trajectory optimization \cite{mordatch2012discovery,posa2014direct,manchester2019contact} generates trajectories with unprecedented complexity by allowing a numerical optimizer to make or break contact points. In this paper, we propose to borrow these techniques and design a model-based grasp planner without using pre-sampled contact points. Unlike contact-rich path planning where contacts only happen on robot end-effectors, we propose to consider every pair of points on the object and the gripper for potential contacts, and allow the optimizer to determine their status. However, there are infinitely many such point pairs, for which a na\"{\i}ve discretization is computationally intractable.

\textbf{Main Result:} We study the grasp planning problem through the lens of IPCC formulation. We introduce a pair of complementary constraints between each pair of points on the object and the gripper. Complementary constraints allow the optimizer to jointly choose contact positions, forces, and gripper's kinematic poses, during which the contact state is implicitly determined. IPCC is one of the most challenging optimization problems that are typically solved by constraint approximation or instantiation \cite{stein2012solve}. However, we show that, in the special case of grasp planning with $Q_\infty$ metric objective function, IPCC reduces to a standard NLP via the technique of kernel-integral relaxation, which reduces an infinite set of constraints to a single constraint involving an surface integral of a kernel function. Moreover, we adapt the Fast Gauss Transform (FGT) \cite{spivak2010fast}, a variant of Fast Multiple Method (FMM) \cite{beatson1997short}, to efficiently evaluate the surface integrals and its Jacobian matrix. This technique leads to significantly speed-up over brute-force evaluation, as shown in Figure \ref{fig:fmm}. Our new approach provides much larger solution space than prior works and inherently allow both precision and power grasps. Finally, we use log-barrier functions and robust line-search scheme to guarantee the satisfaction of penetration- and self-collision free constraints. We summarize the new features of our method in \prettyref{table:features}.
\vspace{-5px}
\setlength{\tabcolsep}{5pt}
\begin{table}[h]
\centering
\begin{tabular}{
>{\columncolor[gray]{0.8}}cc
>{\columncolor[gray]{0.8}}cc
>{\columncolor[gray]{0.8}}c}
\toprule
\rowcolor{gray!50}
Method & Non-Convex & Power Grasps & Collision-Free & Gripper\\
\midrule
\cite{Liu2020DeepDG}         
& $\checkmark$ & & & $\checkmark$\\
\cite{Liu2020MICP,Kragic2017}
& $\checkmark$ & & $\checkmark$ & $\checkmark$\\
\cite{schulman2017grasping,7989253}
& $\checkmark$ & & &\\
\cite{Dai2018}               
& & $\checkmark$ & & $\checkmark$\\
Ours                         
& $\checkmark$ & $\checkmark$ & $\checkmark$ & $\checkmark$\\
\bottomrule
\end{tabular}
\caption{\small{\label{table:features} We compare representative model-based grasp planners in terms of handling complex non-convex objects, planning power grasps, ensuring collision-free, planning both grasp qualities and gripper poses. Note that some methods \cite{Dai2018,Liu2020DeepDG} consider collision-free constraints but the underlying numerical model cannot ensure the constraints are satisified.}}
\vspace{-5px}
\end{table}

Our grasp planning method is fast and robust, which has been verified by batch processing 20 objects with various geometrical and topological complexities using a 3-fingered, 15-DOF Barrett Hand and a 5-fingered, 24-DOF Shadow Hand. Compared with prior state-of-the-arts, our algorithm achieves considerably less computational time than \cite{Liu2020MICP}, higher robustness to penetrations than \cite{Dai2018}, or higher quality of grasps than \cite{Liu2020DeepDG}.

\section{Related Work}
We briefly review related works in model-free and model-based grasp planning. We then provide background on contact-implicit path planning and fast multipole method.

\textbf{Model-free grasp planners} treat a grasp simulator as a black-box. All the existing model-free planners are sampling-based and inherit celebrated completeness and optimality properties \cite{vahrenkamp2010integrated,6202433}. Various techniques have been proposed to improve their efficacy. Early works \cite{eigengrasp2007,ciocarlie2007dexterous} reduce the dimension of search space by limiting the DOF of a gripper. More recent approaches utilize correlation between samples and formulate the grasp planning in Bayesian optimization \cite{nogueira2016unscented} or multi-arm bandits \cite{laskey2015multi} settings. Model-free method features a high versatility in generalizing to all kinds of 3D objects, gripper modalities, and types of grasps (see e.g. \cite{GraspIt}). These methods have recently witnessed significant progress thanks to the use of data-driven techniques, e.g. \cite{mahler2016dex,Goldberg2020}, but this topic is out of the scope of this work.

\textbf{Model-based grasp planners} exploit additional assumptions on a grasp simulator or use additional outputs from the simulator to further improve the planning performance. For example, \citewithauthor{primitive2003} assumed the 3D objects resemble some simple geometric primitives and \citewithauthor{Dai2018} assumed the 3D objects are convex. Other works make assumptions on the grasp quality metrics, \citewithauthor{6906885,Liu2020MICP} relies on the grasp metric being monotonic and \citewithauthor{schulman2017grasping} proved that $Q_{1,\infty}$ metrics are submodular and used this property to approximate optimal grasps with bounded sub-optimality. Finally, a large body of model-based planners \cite{Dai2018,wang2019manipulation,Liu2020DeepDG,Liu2020MICP} formulate the problem as gradient-based numerical optimization and require a grasp simulator to be differentiable.

\textbf{Contact-implicit optimization} \cite{tassa2012synthesis,mordatch2012discovery,posa2014direct,pan2019gpu} has proven capable of generating complex robot motion trajectories from trivial initialization. Central to these formulations is the use of position-force complementary conditions as hard constraints in a trajectory optimizer. Our method can be interpreted as a generalization of these techniques to grasp planning. The main application of contact-implicit optimization lies in legged robotics, where contacts are assumed to only happen on a few robot end-effectors. However, to enable both precision and power grasps, we need to consider all pairs of potential contact points, leading to an infinite number of decision variables. We emphasize that two prior works \cite{tassa2012synthesis,pan2019gpu} lifted the contact-on-end-effector assumption, and allows contacts to happen anywhere on the robot. However, these methods rely on smooth contact models and do not pertain (self-)collision-free guarantee.

\textbf{Fast multipole method} finds most applications in large scale numerical simulation of N-body problems using Boundary Element Methods (BEM), where each pair of two bodies have influences on each other. As a result, summing up the total influences on all bodies incur a computational cost of $\mathcal{O}(N^2)$. FMM reduces this cost to $\mathcal{O}(N\log(N))$ or even $\mathcal{O}(N)$ by aggregating bodies into clusters and approximating the cluster-wise influences using truncated Taylor or Laurent series, while the approximation error can be arbitrarily bounded (see \cite{beatson1997short} for more details). A major advantage of BEM over Finite Element Methods (FEM) \cite{Logan2000} is that BEM only uses a surface mesh while FEM requires a volume mesh. This property has been exploited in \cite{pan2020grasping} to account for object deformations under grasp. In this work, we show that infinite complementary constraints can be replaced with a single constraint involving a kernel integration, whose value and Jacobian matrix can be evaluated efficiently using the FGT \cite{spivak2010fast}.
\section{Grasp Planning as IPCC}
We first review the basics of grasp planning. We assume that there is an object with surface $S_o$ and a robot surface $S_r$ determined by the robot's configuration $\theta$, denoted as $S_r(\theta)$, both of which are 2D manifolds. A robot can apply a wrench $w(x)$ on $x\in S_o$ if and only if $x$ is in contact or $x\in S_r$. The wrench is associated with a contact force $f(x)\in \mathcal{C}(x)$ by the relationship: $w(x)=\TWO{f(x)}{x\times f(x)}^T$, where $\mathcal{C}(x)$ is the friction cone at $x$ defining feasible forces, $x\times$ is the cross-product matrix, and we assume the object's center-of-mass is placed at the origin. When the object is undergoing external wrench $w_o$, the robot must immobilize the object via an counteracting wrench $w_{sum}$ to maintain a grasp, defined as $w_{sum}\triangleq\int_{S_o}w(x) dx$. The quality of a grasp measured using $Q_\infty$ metric is defined as:
\begin{align*}
Q_\infty\triangleq
\begin{cases}
\fmin{\|w_o\|=1}\fmax{f(x)}\;&
\left<w_o,w_{sum}\right>\\
\ST\;&\left<n(x),f(x)\right>\leq1
\end{cases},
\end{align*}
where $n(x)$ is the inward normal at $x\in S_o$. Intuitively, $Q_\infty$ equals to the largest magnitude of external wrench that the robot can counteract along all possible directions, using bounded grip force. Note that the above integral must be well-defined because the constraint $\left<n(x),f(x)\right>\leq1$ makes the integrand bounded and the domain of integral is also bounded. In this paper, we consider the following discretized $Q_\infty$ by limiting $w_o$ to a finite set $w_o^1,\cdots,w_o^D$:
\begin{equation}
\begin{aligned}
\label{eq:QINF}
Q_\infty\triangleq
\begin{cases}
\fmin{d=1,\cdots,D}\;\fmax{f^d(x)}\;&
\left<w_o^d,w_{sum}\right>\\
\ST\;&\left<n(x),f^d(x)\right>\leq1
\end{cases},
\end{aligned}
\end{equation}
where $f^d$ is the contact force to resist external wrench along $w_o^d$. Combining the definition of $Q_\infty$ and the force-position complementary condition, a grasp planning problem is defined by the following IPCC:
\begin{equation}
\begin{aligned}
\label{eq:IPCC}
\argmax{\theta,f^d(x)\in C(x)}\;&Q_\infty\\
\ST\;&0\leq\left<n(x),f^d(x)\right>\perp d_r(x,\theta)\geq0,
\end{aligned}
\end{equation}
which inherently handles power grasps using infinitely many variables $f^d(x)$, each involved in a complementary constraint dictating that only points in contact can impose non-zero forces on the object. Here $d_r(x,\theta)$ is the distance between $x$ and the robot surface at configuration $\theta$.
\begin{table}
\caption{\label{table:symbols} Symbol Table.}
\resizebox{.48\textwidth}{!}{
\begin{tabular}{cc}
\rowcolors{0}{gray!50}{white}
\begin{tabular}{ll}
\toprule
Variable & Definition \\
\midrule
$S_o$ & object surface\\
$S_r$ & robot surface\\
$\theta$ & robot configuration\\
$x$ & a point on object\\
$y$ & a point on robot in global coordinates\\
$R,t$ & local-to-global rotation, translation\\
$y^l$ & a point on robot in local coordinates\\
$\mathcal{C}$ & feasible force cone\\
$n(x)$ & outward normal on $x$\\
$f,w_{sum}$ & force,sum of wrench on object\\
$f^d,w_o^d$ & $d$th external force,wrench\\
$Q_\infty$ & grasp quality metric\\
$D$ & number of sampled directions\\
$d_r$ & distance to the robot\\
$\alpha$ & complementary relaxation parameter\\
$g^d(x)$ & resisting wrench on $x$\\
\hline
\end{tabular}
\rowcolors{0}{gray!50}{white}
\begin{tabular}{ll}
\toprule
Variable & Definition \\
\midrule
$G^d(\theta)$ & resisting wrench for direction $d$\\
$K$ & kernel function\\
$L,L_{o,r}$ & collision avoidance term\\
$\{P,R,n,n_0\}^{pq}$ & separating plane\\
$S_r^l,V(l)$ & $l$th link, number of vertices\\
$L$ & number of links\\
$\phi,\rho$ & merit function, constraint weight\\
$\gamma$ & constraint weight in merit function\\
$r$ & radius of Poisson's disk\\
$N,M$ & number of source, target points\\
$c,b$ & center point of source, target box\\
$H_n,h_n,h_n^j$ & Hermite functions\\
$A_n,B_n^j,C_n,\{E,F,H,I\}_m$ & FGT coefficients\\
$n_0$ & number of truncated terms in FGT\\
$\mathcal{B}$ & clustering box of FGT\\
$\mathcal{S}(y)$ & source strength\\
\hline
\end{tabular}
\end{tabular}}
\vspace{-10px}
\end{table}
\section{\label{sec:IPCCReduction}Kernel-Integral Reduction}
In this section, we propose a practical reformulation of \prettyref{eq:IPCC} as a standard NLP by using the relaxed complementary constraint \cite{hoheisel2013theoretical}. Each complementary constraint is equivalent to three inequalities:
\begin{align*}
\begin{cases}
&\left<n(x),f^d(x)\right>\geq0\\
&d_r(x,\theta)\geq0\\
&\left<n(x),f^d(x)\right>d_r(x,\theta)\leq0,
\end{cases}
\end{align*}
and \citewithauthor{hoheisel2013theoretical} proposed to replace the third inequality with $\left<n(x),f^d(x)\right>d_r(x,\theta)\leq\alpha$ for some small, positive relaxation constant $\alpha$, and then use Sequential Quadratic Programming (SQP) to satisfy a sequence of relaxed, differentiable constraints with a monotonically decreasing series of $\alpha$ that tends to zero. However, SQP cannot handle our relaxed form due to non-differentiable term $d_r$, the distance between a point and a general surface of the robot. To sidestep this incompatibility, we rewrite $d_r(x,\theta)=\fmin{y\in S_r(\theta)}\|x-y\|$ and replace each relaxed complementary constraint with an infinite set:
\begin{align*}
\left<n(x),f^d(x)\right>\|x-y\|\leq\alpha\quad\forall y\in S_r(\theta).
\end{align*}
With a slight rearrangement and by introducing a so-called kernel function $K(\bullet,\alpha)\triangleq \alpha/\bullet$, each complementary constraint takes the form: 
\begin{equation}
\begin{aligned}
\label{eq:KKT1}
&\left<n(x),f^d(x)\right>\leq K(\|x-y\|,\alpha)\quad\forall y\in S_r(\theta)\\
&0\leq\left<n(x),f^d(x)\right>\leq 1,
\end{aligned}
\end{equation}
where we have merged the requirement of $Q_\infty$ that normal force magnituide is less than $1$. We show that, as $\alpha\to0$, the infinite set of constraint \prettyref{eq:KKT1} is equivalent to the following single constraint for a specific choice of kernel function $K(\bullet,\alpha)$:
\begin{align}
\label{eq:KKT2}
\left<n(x),f^d(x)\right>\leq\int_{S_r}K(\|x-y\|,\alpha)dy.
\end{align}
\begin{lemma}
\label{lem:KERNEL}
Suppose we choose:
\begin{align*}
K(\bullet,\alpha)=\frac{-1}{(2\pi\log\alpha)(\bullet^2+\alpha^2)},
\end{align*}
and the constraint \prettyref{eq:KKT2} is satisfied for a monotonic sequence $0\leq\alpha^k\to0$:
\begin{align*}
0\leq\left<n(x^k),f^d(x^k)\right>\leq\int_{S_r}K(\|x^k-y\|,\alpha^k)dy,
\end{align*}
and there is a convergence subsequence that tends to $x^*$, then we have $\left<n(x^*),f^d(x^*)\right>d_r(x^*,\theta)\leq0$ and $\left<n(x^*),f^d(x^*)\right>\leq1$.
\end{lemma}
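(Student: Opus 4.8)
The plan is to read the right-hand side of \prettyref{eq:KKT2} as an approximate identity and to extract its $\alpha\to0$ asymptotics separately for evaluation points on and off the robot surface. First I would note that for $0<\alpha<1$ we have $\log\alpha<0$, so the prefactor $-1/(2\pi\log\alpha)$ is positive and $K(\bullet,\alpha)>0$; hence the surface integral is nonnegative and consistent with the lower bound $0\le\left<n(x^k),f^d(x^k)\right>$. Writing
\begin{align*}
I(x,\alpha)\triangleq\int_{S_r}\frac{dy}{\|x-y\|^2+\alpha^2},
\end{align*}
the right-hand side of \prettyref{eq:KKT2} equals $I(x,\alpha)/(-2\pi\log\alpha)$, and the whole lemma reduces to two facts about $I$: (i) if $x^*$ is bounded away from $S_r$ then $I(x^k,\alpha^k)$ stays bounded; and (ii) there is a constant $C$, uniform over a bounded region containing the $x^k$, with $I(x,\alpha)\le-2\pi\log\alpha+C$.

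The second conclusion follows from (ii): dividing by the positive quantity $-2\pi\log\alpha^k$ gives $I(x^k,\alpha^k)/(-2\pi\log\alpha^k)\le1+C/(-2\pi\log\alpha^k)$, whose right side tends to $1$ since $-2\pi\log\alpha^k\to+\infty$. Assuming $f^d$, and hence $x\mapsto\left<n(x),f^d(x)\right>$, is continuous, passing to the limit in \prettyref{eq:KKT2} yields $\left<n(x^*),f^d(x^*)\right>=\lim_k\left<n(x^k),f^d(x^k)\right>\le\limsup_k I(x^k,\alpha^k)/(-2\pi\log\alpha^k)\le1$. For the first conclusion I would split on $d_r(x^*,\theta)$. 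If $d_r(x^*,\theta)=0$ the product $\left<n(x^*),f^d(x^*)\right>d_r(x^*,\theta)$ vanishes and there is nothing to prove. If $d_r(x^*,\theta)>0$, then by continuity of $d_r(\cdot,\theta)$ we have $\|x^k-y\|\ge d_r(x^*,\theta)/2>0$ for all $y\in S_r$ and large $k$, so $I(x^k,\alpha^k)\le 4\,\text{Area}(S_r)/d_r(x^*,\theta)^2$ is bounded, which is fact (i); multiplying by $1/(-2\pi\log\alpha^k)\to0$ forces the right side of \prettyref{eq:KKT2} to $0$, so $\left<n(x^*),f^d(x^*)\right>\le0$ and the product is nonpositive.

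The main work is the uniform estimate (ii), which I would obtain by a local analysis of the singular surface integral. Fixing $x$, letting $y^*\in S_r$ be a nearest point at distance $\delta=d_r(x,\theta)\ge0$ (so $x-y^*$ is along the surface normal at $y^*$), I would split $S_r$ into a geodesic disk of radius $\epsilon$ about $y^*$ and its complement. On the complement $\|x-y\|\ge\epsilon$, contributing at most $\text{Area}(S_r)/\epsilon^2$, a constant. On the disk I would use a tangent-plane chart $y=y^*+v$, for which $\|x-y\|^2=\delta^2+|v|^2+O\!\left(|v|^2(\delta+|v|)\right)$ and $dy=(1+O(|v|^2))\,dv$; the curvature and measure remainders are integrable against $1/(|v|^2+\alpha^2)$ and contribute only a bounded amount. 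The leading term is
\begin{align*}
\int_{|v|\le\epsilon}\frac{dv}{|v|^2+\delta^2+\alpha^2}
&=\pi\log\frac{\epsilon^2+\delta^2+\alpha^2}{\delta^2+\alpha^2}\\
&\le\pi\log(\epsilon^2+\delta^2+\alpha^2)-2\pi\log\alpha,
\end{align*}
where the inequality uses $\delta^2+\alpha^2\ge\alpha^2$; for $x$ in a bounded region and $\alpha\le1$ the first term is bounded, giving (ii). The hard part is making these constants uniform in $x^k$ as $x^k\to x^*\in S_r$ and $\alpha^k\to0$ jointly, i.e. controlling the chart, the curvature remainder, and the far-field cutoff independently of how fast $\delta_k=d_r(x^k,\theta)$ decays relative to $\alpha^k$. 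The observation that makes this clean is that moving $x$ off the surface only shrinks the singular contribution, since $-\pi\log(\delta^2+\alpha^2)\le-2\pi\log\alpha$, so the worst case is $\delta=0$ and $C$ can be taken from that case.
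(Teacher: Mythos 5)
Your overall route coincides with the paper's: split on whether $d_r(x^*,\theta)$ is positive or zero, show the normalized kernel integral tends to $0$ in the first case (forcing $\left<n(x^*),f^d(x^*)\right>=0$, hence a nonpositive product), and show it is asymptotically at most $1$ in the second case via a polar-coordinate evaluation of the singular part, where the ratio of logarithms tends to $1$. The paper's own proof does exactly this but only evaluates the flat-disk integral at the limit point $x^*$; your additions --- the explicit far-field bound $\text{Area}(S_r)/\epsilon^2$, the explicit continuity hypothesis on $x\mapsto\left<n(x),f^d(x)\right>$, the inequality $-\pi\log(\delta^2+\alpha^2)\le-2\pi\log\alpha$ identifying $\delta=0$ as the worst case, and the attention to the joint limit $x^k\to x^*$, $\alpha^k\to0$ --- are all points the paper silently glosses over, so in rigor you go beyond it.

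There is, however, one step that fails as stated: the uniform additive estimate (ii), $I(x,\alpha)\le-2\pi\log\alpha+C$, and specifically the claim that ``the curvature and measure remainders are integrable against $1/(|v|^2+\alpha^2)$ and contribute only a bounded amount.'' The measure (Jacobian) remainder is indeed bounded, since $O(|v|^2)/(|v|^2+\alpha^2)=O(1)$ integrates to $O(\epsilon^2)$ over the chart. But the distance remainder is a \emph{relative} perturbation of the denominator: with $\|x-y\|^2=\delta^2+|v|^2+E$ and $|E|=O\bigl(|v|^2(\delta+|v|)\bigr)\le\eta\,|v|^2$, $\eta=O(\delta+\epsilon)$, one gets
\begin{align*}
\left|\frac{1}{\delta^2+|v|^2+E+\alpha^2}-\frac{1}{\delta^2+|v|^2+\alpha^2}\right|
\le\frac{\eta}{(1-\eta)\bigl(\delta^2+|v|^2+\alpha^2\bigr)},
\end{align*}
whose integral over the chart is of order $\eta\cdot(-\log\alpha)$, i.e.\ it scales with the main term rather than with a constant. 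Hence for any fixed chart radius $\epsilon$ there is no uniform additive constant $C$, and your ``worst case $\delta=0$'' observation does not cure this, since the curvature term persists even at $\delta=0$. The repair is cheap and stays inside your framework: either prove the multiplicative bound $I(x,\alpha)\le\bigl(1+C'(\delta+\epsilon)\bigr)(-2\pi\log\alpha)+C_\epsilon$, deduce $\limsup_k\le1+C'\epsilon$ for every fixed $\epsilon$, and let $\epsilon\to0$ at the end; or let the chart radius shrink with $\alpha$, e.g.\ $\epsilon=(-\log\alpha)^{-1/4}$, which makes both the far-field term and the curvature term $o(-\log\alpha)$. With that repair your argument is complete, and it remains noticeably more careful than the paper's own proof.
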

\begin{proof}
Without loss of generality, we can assume the entire sequence $x^k$ is convergent to $x^*$. \TE{Case I:} If $d_r(x^*,\theta)>0$, then by the choice of kernel function we have $K(\|x^k-y\|,\alpha^k)\to0$ and \prettyref{eq:KKT2} implies $\left<n(x^*),f^d(x^*)\right>=0$. \TE{Case II:} If $d_r(x^*,\theta)=0$, then there is a unique point $y^*\in S_r$ such that $\|x^*-y^*\|=0$ and the integral is singular at $y^*$. The integral is thereby nonzero only within an infinitesimal disk around $y^*$ with radius $\delta r$. By changing the integral under polar coordinates, we have:
\small
\begin{align*}
\lim_{\alpha\to0}\int_0^{\delta r} \frac{-r}{\log\alpha(r^2+\alpha^2)} dr
=\lim_{\alpha\to0}\frac{\log\alpha-\frac{1}{2}\log(\delta r^2+\alpha^2)}{\log\alpha}=1.
\end{align*}
\normalsize
We conclude that \prettyref{eq:KKT2} is an appropriate equivalence of \prettyref{eq:KKT1} in the limit of $\alpha$.
\end{proof}
Note that $K$ does not need to take the exact form as in \prettyref{lem:KERNEL} in practice. This is because \prettyref{lem:KERNEL} only considers the limiting behavior of $K$ when $\alpha\to0$, but we would terminate optimization with a finite, positive $\alpha$ due to limited machine precision. Our experiments show that it suffice to choose any $K$ that decay quickly as $r\to\infty$. Indeed, we find that choosing $K$ to be an exponential function would lead to an efficient algorithm for evaluating the integral in \prettyref{eq:KKT2} and refer readers to \prettyref{sec:integral} for more details.
 
Next, we show that $f^d$ has closed-form solution. We notice the inner max function in \prettyref{eq:QINF} can be moved into the integral, giving:
\begin{align*}
\fmin{d=1,\cdots,D}\int_{S_o}\fmax{f^d(x)}\left<w_o^d,w^d(x)\right>dx,
\end{align*}
where the integrand is the only term related to $w^d(x)$ and $w^d(x)$ is positively proportional to $Q_\infty$. If we fix all other variables, $f^d(x)$ is the solution of the following subproblem:
\begin{equation}
\begin{aligned}
\label{eq:FD}
\argmax{f^d(x)\in\mathcal{C}(x)}\;&\left<w_o^d,w^d(x)\right>\\
\ST\;&0\leq\left<n(x),f^d(x)\right>\leq\int_{S_r}K(\|x-y\|,\alpha)dy.
\end{aligned}
\end{equation}
Using a similar reasoning as \cite{Liu2020DeepDG,schulman2017grasping}, the solution to \prettyref{eq:FD} has a closed form:
\begin{align*}
&f^d(x)=g^d(x)\int_{S_r}K(\|x-y\|,\alpha)dy\\
&g^d(x)\triangleq
\begin{cases}
\argmax{f^d(x)\in\mathcal{C}(x)}\;&\left<w_o^d,w^d(x)\right>\\
\ST\;&0\leq\left<n(x),f^d(x)\right>\leq1
\end{cases},
\end{align*}
and we refer readers to \cite{schulman2017grasping} for the derivation of the expression of $g^d(x)$. When plugged into \prettyref{eq:IPCC}, the closed-form solution already incorporates the relaxed complementary constraints and eliminates all the complementary variables, thus reducing the IPCC to the following standard NLP:
\begin{equation}
\begin{aligned}
\label{eq:NLP}
\argmax{\theta}\;&Q_\infty\triangleq\fmin{d=1,\cdots,D}\;G^d(\theta)\\
&G^d(\theta)\triangleq\int_{S_o}g^d(x)dx\int_{S_r}K(\|x-y\|,\alpha)dy\\
\ST\;&d_r(x,\theta)\geq0,
\end{aligned}
\end{equation}
which provides a variational explanation of $Q_\infty$ that allows any point on the robot surface to make contact with any other point on the object, thereby unifying precision and power grasps. The choice of grasp points is implicitly encoded in the double integral over the object and robot surfaces. We will show that such integrals can be approximately efficiently using FMM.

\subsection{Guaranteed (Self-)Collision-Free}
\prettyref{eq:NLP} is still semi-infinite due to the infinitely many collision constraints: $d_r(x,\theta)\geq0$. In prior work \cite{Liu2020DeepDG}, the collision-free constraint $d_r\geq0$ is imposed using soft penalty terms, which is not guaranteed to be satisfied. We propose to ensure collision-free via the log-barrier function:
\begin{align}
\label{eq:collision}
L_o(\theta)=-\int_{S_o}\log\left[d_r(x,\theta)\right]dx.
\end{align}
Using a line-search algorithm, we can guarantee that $L_o$ takes a finite value throughout the optimization, which in turn implies collision-free between the robot and the gripper. In practice, we assume the object is provided as a point cloud and replace the integral of $S_o$ with a summation over each point. $L_o$ is differentiable as shown in \cite{Liu2020DeepDG} and the evaluation of summation can be accelerated using a bounding volume hierarchy and log-barrier function with local support (see \cite{ruiqi2021} for more details).

We further consider self-collision assuming each robot link takes a convex shape. Assuming that the robot surface is decomposed into $L$ links $S_r=\bigcup_{l=1}^LS_r^l$ where each $S_r^l$ is the convex hull of $V(l)$ vertices $\{y_1^l(\theta),\cdots,y_{V(l)}^l(\theta)\}$. Then a separating plane $P^{pq}(y)=\left<n^{pq},y\right>+n_0^{pq}$ could be introduced to avoid collision between a pair of links $S_r^{p,q}$, where $n^{pq},n_0^{pq}$ are plane normal and offset. The log-barrier function for self-collision takes the following form:
\begin{align*}
L_r(\theta,n^{pq},n_0^{pq})=&-\sum_{1\leq p<q\leq L}
\sum_{i=1}^{V(p)}\log\left[\left<n^{pq},y_i^p(\theta)\right>+n_0^{pq}\right]\\
&-\sum_{1\leq p<q\leq L}
\sum_{j=1}^{V(q)}\log\left[-\left<n^{pq},y_j^q(\theta)\right>-n_0^{pq}\right].
\end{align*}
We propose to use block coordinate descend algorithm and interleave the optimization for $\theta$ and $P^{pq}$, so that the optimization for each plane $P^{pq}$ is independent. To ensure the plane normal has unit length, we use reparameterize $n^{pq}=R^{pq}e$ with $R^{pq}\in\mathcal{SO}(3)$ represented using Rodriguez formula and $e$ being an arbitrary unit vector.

\subsection{Simplified SQP for Minimizing $Q_\infty$}
Putting everything together, we recast NLP (\prettyref{eq:NLP}) as an unconstrained optimization:
\begin{align}
\label{eq:opt}
\argmin{\theta,P^{pq}}\;L(\theta)-Q_\infty(\theta)\quad L\triangleq L_o+L_r,
\end{align}
which can be solved using a simplified SQP algorithm. The non-differentiable $\min$ operator in $Q_\infty$ can be replace with hard constraints:
\begin{align}
\label{eq:optCons}
\argmin{\theta,P^{pq},Q}\;L(\theta)-Q\quad\ST\;Q\leq G_d(\theta),
\end{align}
where $Q$ is a slack variable. We show in our appendix that SQP takes a simplified form when solving \prettyref{eq:optCons} by observing that the QP subproblem is always feasible.
\begin{figure*}
\begin{minipage}[b]{0.7\linewidth}
\centering
\scalebox{0.9}{
\includegraphics[width=0.98\linewidth]{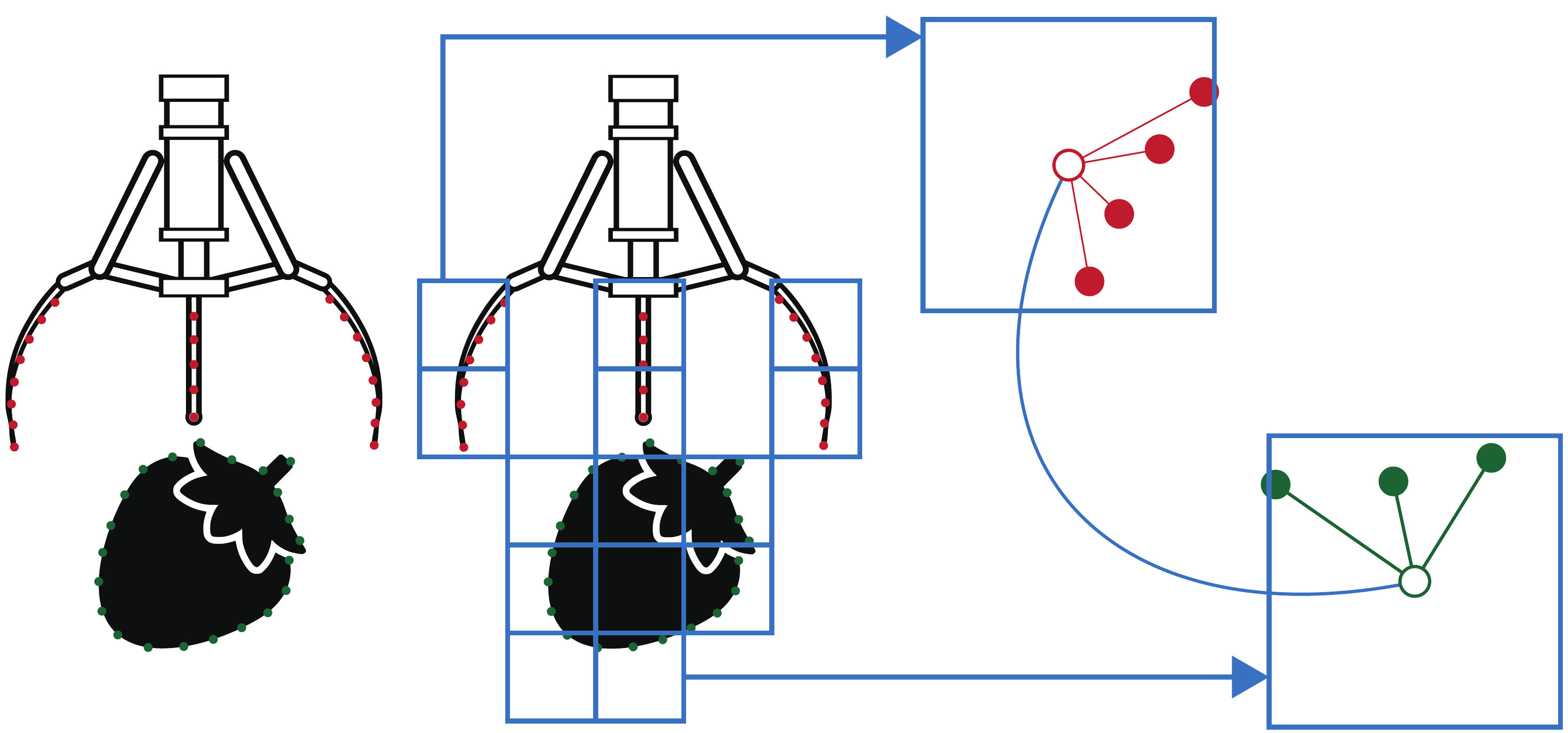}
\put(-270,20){(a)}
\put(-170,20){(b)}
\put(-140,147){(c): M2M}
\put(-110,55){(d): M2L}
\put(-63,7){(e): L2L}
\put(-205,5){$\mathcal{B}$}
\put(-96,100){$y$}
\put(-122,123){$c$}
\put(-27,30){$b$}
\put(-55,55){$x$}}
\vspace{-10px}
\end{minipage}
\hfill
\begin{minipage}[b]{.29\textwidth}
\captionof{figure}{\small{We illustrate FGT applied to grasp planning. (a): We sample possible contact points both on the gripper (red) and the object surface (green). (b): The number of sample points is large and we cluster them into axis-aligned boxes $\mathcal{B}$ (blue). FGT works in three steps. (c): M2M step substitutes the contributions (to $G^d$) of source point $y$ with the center point $c$ using Hermite expansion (red line). (d): M2L step substitutes the contributions of center point $c$ with the center point $b$ using Taylor expansion (blue line). (e): L2L step evaluates $G^d$ for target point $x$ around $b$ (green line).}}
\label{fig:FGT}
\vspace{-20px}
\end{minipage}
\end{figure*}
\section{\label{sec:integral}Numerical Integral Evaluation}
Although we have derived the standard NLP \prettyref{eq:optCons}, the integrals involved in $G_d$ and $L(\theta)$ do not have analytic expressions and need to be evaluated numerically. The double integral involved in $G_d$ is known as Fredholm integral of the first kind, where the integrand is a multiplication of a source term $g^d(x)$ and a kernel function $K(\|x-y(\theta)\|,\alpha)$ that is singular when $x$ is close to $y(\theta)$ and decay quickly as the distance increases. An intuitive method to discretize $G^d$ would sample the two surfaces $S_o$, $S_r$ with dense set of $N$ points $\{x\}\in S_o$ and $M$ points $\{y(\theta)\}\in S_r(\theta)$ using Poisson disk sampling with radius $r$ and approximate $G^d$ with double integral:
\begin{align*}
G^d\approx(\pi r^2)^2
\sum_{x}g^d(x)\sum_{y}K(\|x-y\|,\alpha),
\end{align*} 
which incurs a cost of $\mathcal{O}(NM)$. We introduce a modified FGT \cite{spivak2010fast}, a variant of FMM that can be applied if $K$ is chosen to be $K(\bullet,\alpha)=\exp(-\bullet^2/\alpha)$. The standard FGT would only computes $G^d$ and we derive extra equations to evaluate $\FPPR{G^d}{\theta}$ as required by SQP. (We use the same set of samples to discretize the integral in \prettyref{eq:collision}, the computational cost of which is $\mathcal{O}(N)$. Compared with $G^d$, the cost to evaluate \prettyref{eq:collision} is marginal.)

We use $x$ to denote a point on the object and $y$ denotes a point on the gripper. As illustrated in \prettyref{fig:FGT}, FGT first cluster all the sampled points into boxes of side length $2\sqrt{\alpha}$, where each of $\mathcal{B}_x$ and $\mathcal{B}_y$ denotes a box that contains some point $x,y$, respectively. For each $\mathcal{B}_y$, FGT first uses Multipole-to-Multipole (M2M) step to approximate their contribution (to the integral) via Hermite expansion. Then for each $\mathcal{B}_{x,y}$ pair, FGT uses Multipole-to-Local (M2L) step to transfer the contribution from $\mathcal{B}_y$ to $\mathcal{B}_x$. Finally, FGT uses Local-to-Local (L2L) step to distribute the contribution from the $\mathcal{B}_x$ to each $x$.

\begin{algorithm}
\caption{\label{alg:FGT}FGT}
\begin{algorithmic}[1]
\Require{Initial error threshold $\epsilon$ and $n_0(\epsilon)$}
\State Cluster all $x,y$ into boxes of side length $2\sqrt{\alpha}$
\For{$\mathcal{B}_y$}
\State $c\gets$ center of $\mathcal{B}_y$
\State Precompute $A_n,B_n^j,C_n$ (\prettyref{eq:M2M},\ref{eq:M2MG})
\EndFor
\For{$\mathcal{B}_{x,y}$ pair}
\State $c\gets$ center of $\mathcal{B}_y$
\State $b\gets$ center of $\mathcal{B}_x$
\State Precompute $E_m,F_m,H_m,I_m$ (\prettyref{eq:M2L},\ref{eq:M2LG})
\EndFor
\For{$\mathcal{B}_x$}
\State Compute $G^d,\FPP{G^d}{\theta}$ (\prettyref{eq:L2LG})
\EndFor
\end{algorithmic}
\end{algorithm}

\subsection{M2M Step}
Assuming $x,y$ are two 1D points, the FGT is based on the Hermite expansion of exponential function:
\begin{align*}
\ifdetail{
&\exp(2yx-x^2)=\sum_{|n|=0}^\infty\frac{x^n}{n!}H_n(y)\\
\Rightarrow&\exp(-(y-x)^2)=
\sum_{|n|=0}^\infty\frac{x^n}{n!}H_n(y)\exp(-y^2)
\triangleq\sum_{|n|=0}^\infty\frac{x^n}{n!}h_n(y)\\
\Rightarrow&
}\fi
\exp(-(\frac{y-x}{\sqrt{\alpha}})^2)=
\sum_{|n|=0}^\infty\frac{1}{n!}
(\frac{c-y}{\sqrt{\alpha}})^n
h_n(\frac{x-c}{\sqrt{\alpha}}),
\end{align*}
where $H_n$ are Hermite polynomials and $c$ is the center point of $\mathcal{B}_y$. If $x,y,c$ are 3D points, then we use subscript to denote the coordinate index and the expansion takes the same form as above but $n$ is a vector $\THREE{n_1}{n_2}{n_3}$. We have $n!\triangleq n_1!n_2!n_3!$, $|n|\triangleq n_1+n_2+n_3$, $r^n\triangleq\Pi_{i=1}^3r_i^{n_i}$, and $h_n(r)\triangleq\Pi_{i=1}^3h_{n_i}(r_i)$. The gradient with respect to $y_i$ has the following Hermite expansion:
\begin{align*}
&\FPP{}{y_j}\left[\exp(-(\frac{y-x}{\sqrt{\alpha}})^2)\right]=
\frac{2(x_j-y_j)}{\alpha}\exp(-(\frac{y-x}{\sqrt{\alpha}})^2)\\
=&\sum_{|n|=0}^\infty\frac{-2}{\sqrt{\alpha}n!}
\left[(\frac{c-y}{\sqrt{\alpha}})^{n+e_j}
h_n(\frac{x-c}{\sqrt{\alpha}})+
(\frac{c-y}{\sqrt{\alpha}})^n
h_n^j(\frac{x-c}{\sqrt{\alpha}})\right],
\end{align*}
where $h_n^j(r)\triangleq r_jh_n(r)$. The two above expansions form the Multipole-to-Multipole (M2M) step of FGT. If there is a set of points $y\in\mathcal{B}_y$ around a center point $c$, then we have:
\begin{equation}
\begin{aligned}
\label{eq:M2M}
&\sum_{y\in\mathcal{B}_y}\mathcal{S}(y)\exp(-(\frac{y-x}{\sqrt{\alpha}})^2)
=\sum_{|n|=0}^\infty A_n h_n(\frac{x-c}{\sqrt{\alpha}})\\
&A_n\triangleq\sum_{y\in\mathcal{B}_y}\mathcal{S}(y)\frac{1}{n!}(\frac{c-y}{\sqrt{\alpha}})^n.
\end{aligned}
\end{equation}
Similarly for the gradient, we have:
\begin{equation}
\begin{aligned}
\label{eq:M2MG}
&\sum_{y\in\mathcal{B}_y}\mathcal{S}(y)\FPP{}{y_j}
\left[\exp(-(\frac{y-x}{\sqrt{\alpha}})^2)\right]\\
=&\sum_{|n|=0}^\infty B_n^j h_n(\frac{x-c}{\sqrt{\alpha}})+
\sum_{|n|=0}^\infty C_n h_n^j(\frac{x-c}{\sqrt{\alpha}})\\
&B_n^j\triangleq\sum_{y\in\mathcal{B}_y}
\mathcal{S}(y)\frac{-2}{\sqrt{\alpha}n!}(\frac{c-y}{\sqrt{\alpha}})^{n+e^j}
\quad C_n=\frac{-2}{\sqrt{\alpha}}A_n,
\end{aligned}
\end{equation}
where $\mathcal{S}(y)$ is some $y$-dependent coefficients. The M2M step involves dividing the space into a set of axis-aligned boxes $\mathcal{B}_y$ with side length $2\sqrt{\alpha}$. For all the source points $y$ belonging to a $\mathcal{B}_y$, M2M identifies their contributions with a single center point $c$ using Hermite expansion (\prettyref{eq:M2M} and \prettyref{eq:M2MG}). FGT only retains terms with $n\leq n_0$, where $n_0$ is chosen to ensure error is small than a user chosen threshold (see \cite{spivak2010fast} for more details).

\subsection{M2L Step}
The center points $c$ can still be faraway from target points $x$. M2L step identifies the contributions of center points $c$ with some other points $b$ that is close to target points using Taylor expansion. A Hermite expansion has the following equivalent Taylor expansion:
\small
\begin{equation}
\begin{aligned}
\label{eq:M2L}
&\sum_{|n|=0}^\infty A_n h_n(\frac{x-c}{\sqrt{\alpha}})
=\sum_{|m|=0}^\infty E_m (\frac{x-b}{\sqrt{\alpha}})^m\\
&E_m\triangleq\frac{(-1)^{|m|}}{m!}\sum_{|n|=0}^\infty A_nh_{n+m}(\frac{c-b}{\sqrt{\alpha}}).
\end{aligned}
\end{equation}
\normalsize
For the gradient, we have:
\small
\begin{equation}
\begin{aligned}
\label{eq:M2LG}
\ifdetail{
&\sum_{|n|=0}^\infty B_n^j h_n(\frac{x-c}{\sqrt{\alpha}})=
\sum_{|m|=0}^\infty F_m (\frac{x-b}{\sqrt{\alpha}})^m\\
&\sum_{|n|=0}^\infty C_n h_n^j(\frac{x-c}{\sqrt{\alpha}})=
\frac{x_j-c_j}{\sqrt{\alpha}}\sum_{|n|=0}^\infty C_n h_n(\frac{x-c}{\sqrt{\alpha}})\\
=&\frac{x_j-b_j}{\sqrt{\alpha}}\sum_{|n|=0}^\infty C_n h_n(\frac{x-c}{\sqrt{\alpha}})+
\frac{b_j-c_j}{\sqrt{\alpha}}\sum_{|n|=0}^\infty C_n h_n(\frac{x-c}{\sqrt{\alpha}})\\
=&\sum_{|m|=0}^\infty \left[
H_m (\frac{x-b}{\sqrt{\alpha}})^{m+e_j}+
I_m (\frac{x-b}{\sqrt{\alpha}})^m\right]\\
}\fi
&\sum_{y\in\mathcal{B}_y}\mathcal{S}(y)\FPP{}{y_j}
\left[\exp(-(\frac{y-x}{\sqrt{\alpha}})^2)\right]\\
=&\sum_{|m|=0}^\infty \left[
H_m (\frac{x-b}{\sqrt{\alpha}})^{m+e_j}+
(F_m+I_m) (\frac{x-b}{\sqrt{\alpha}})^m\right]\\
&F_m\triangleq\frac{(-1)^{|m|}}{m!}\sum_{|n|=0}^\infty B_n^jh_{n+m}(\frac{c-b}{\sqrt{\alpha}})\\
&H_m\triangleq\frac{(-1)^{|m|}}{m!}\sum_{|n|=0}^\infty C_nh_{n+m}(\frac{c-b}{\sqrt{\alpha}})\\
&I_m\triangleq\frac{(-1)^{|m|+1}}{m!}\sum_{|n|=0}^\infty C_nh_{n+m}^j(\frac{c-b}{\sqrt{\alpha}}).
\end{aligned}
\end{equation}
\normalsize
Again we only retain all the terms with $m\leq n_0$. The M2L step involves dividing the space into another set of axis-aligned boxes $\mathcal{B}_x$ with side length $2\sqrt{\alpha}$. For each pair of boxes with center points $c,b$, M2L transfers the contribution from $c$ to $b$ (\prettyref{eq:M2L} and \prettyref{eq:M2LG}). This only needs to be done for pairs of boxes that are certain distances away.

\begin{figure*}[t]
\centering
\begin{tabular}{cc}
\includegraphics[width=\columnwidth]{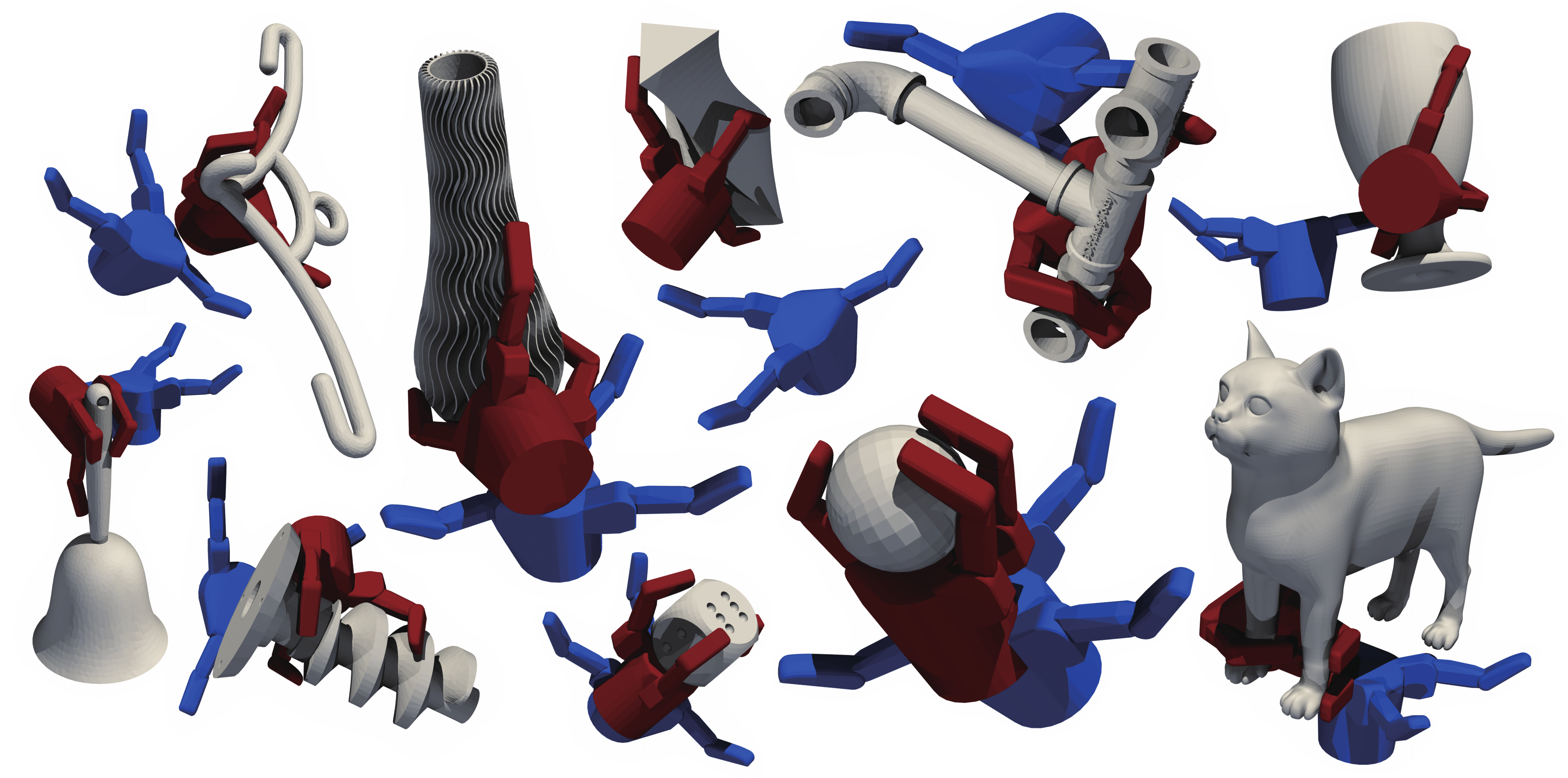}
\includegraphics[width=\columnwidth]{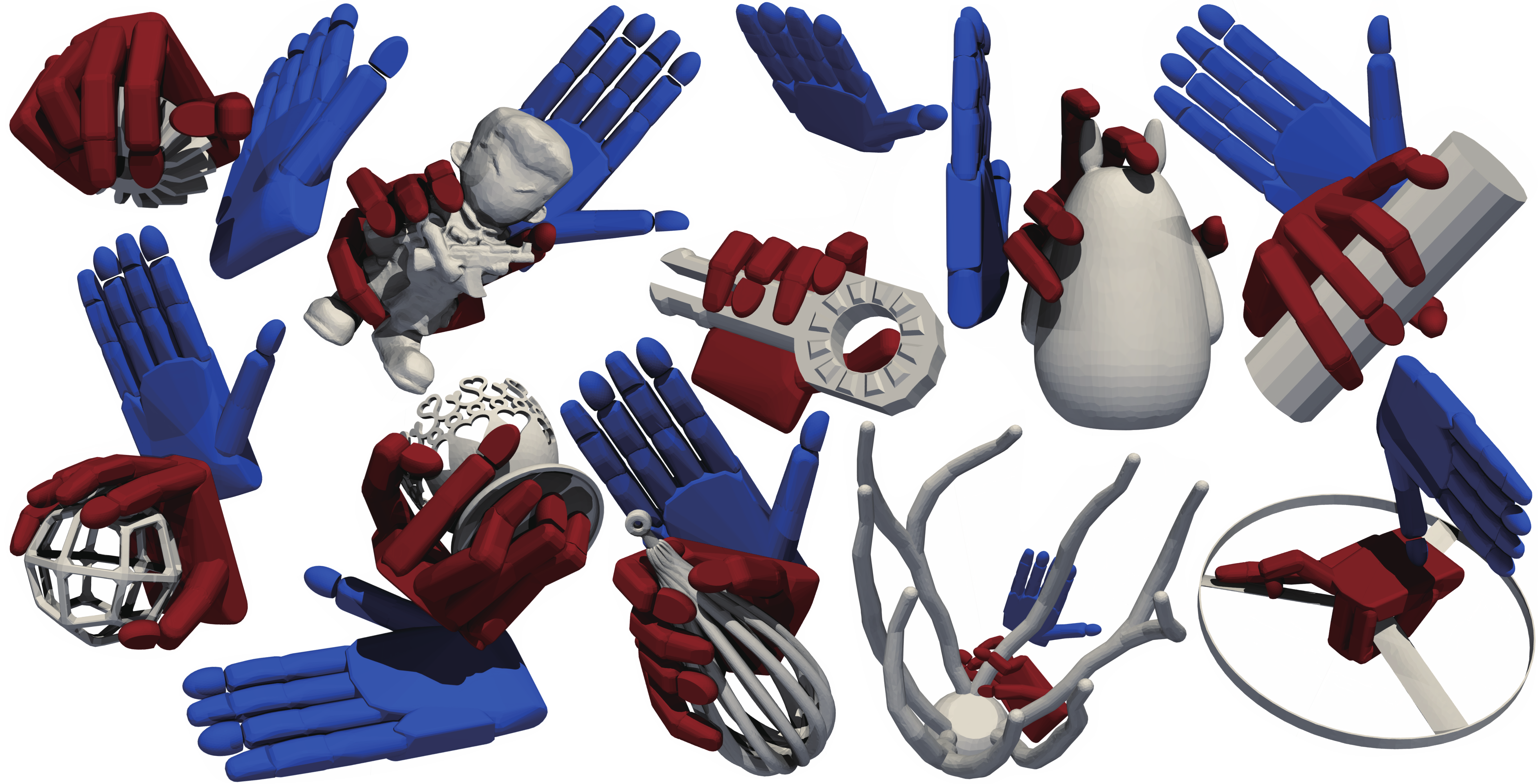}
\end{tabular}
\caption{\label{fig:data}We apply our method to grasp 10 complex objects using Barrett Hand (left) and Shadow Hand (right).}
\end{figure*}

\setlength{\tabcolsep}{1pt}
\begin{table*}[tbp]
\centering
\begin{tabular}
{l>{\columncolor[gray]{0.8}}
cc>{\columncolor[gray]{0.8}}
cc>{\columncolor[gray]{0.8}}
cc>{\columncolor[gray]{0.8}}
cc>{\columncolor[gray]{0.8}}
cc}
\toprule
\rowcolor{gray!50}
Barrett Hand
& 1 & 2 & 3 & 4 & 5 & 6 & 7 & 8 & 9 & 10  \\
\midrule
Ours                        & 
\convert{1.89942e-06}       & 
\convert{2.29199e-06}       & 
\convert{1.25874e-05}       & 
\convert{2.42596e-06}       & 
\convert{7.04648e-06}       & 
\convert{2.31098e-05}       & 
\convert{2.17305e-05}       & 
\convert{5.29022e-06}       & 
\convert{2.1363e-06}        & 
\convert{1.68546e-05}       \\
$Q_1$-\cite{Liu2020DeepDG}  & 
\convert{2.14656e-07}       & 
\convert{1.33578e-14}       & 
\convert{2.45433e-06}       & 
\convert{1.78334e-06}       & 
\convert{1.50974e-06}       & 
\convert{1.40752e-05}       & 
\convert{7.95582e-06}       & 
\convert{1.87743e-06}       & 
\convert{1.97438e-06}       & 
\convert{3.22105e-06}       \\
Closeness                   &
\convert{1.90409e-06}       &
\convert{6.75748e-07}       &
\convert{1.04532e-05}       &
\convert{2.22011e-06}       &
\convert{7.61138e-06}       &
\convert{1.63571e-05}       &
\convert{2.11329e-05}       &
\convert{1.67361e-06}       &
\convert{2.95927e-06}       &
\convert{3.42978e-06}       \\
$Q_1$-\cite{GraspIt}        &
\convert{1.61077e-06}       &
\convert{2.18296e-06}       &
\convert{9.39794e-08}       &
\convert{3.12716e-06}       &	
\convert{3.37161e-06}       &
\convert{1.4781e-05}        &
\convert{1.94229e-07}       &
\convert{1.10931e-08}       &
\convert{2.38429e-06}       &
\convert{5.11875e-09}       \\
\bottomrule
\toprule
\rowcolor{gray!50}
Shadow Hand
& 1 & 2 & 3 & 4 & 5 & 6 & 7 & 8 & 9 & 10  \\
\midrule
Ours                        & 
\convert{5.32004e-06}       &
\convert{6.46704e-06}       &
\convert{1.16872e-05}       &
\convert{7.71481e-06}       &
\convert{1.4801e-05}        &
\convert{1.77503e-05}       &
\convert{1.15681e-05}       &
\convert{1.14212e-05}       &
\convert{1.01062e-05}       &
\convert{8.95518e-06}       \\
$Q_1$-\cite{Liu2020DeepDG}  & 
\convert{3.76495e-06}       &	
\convert{1.13016e-06}       &	
\convert{4.94282e-19}       &	
\convert{2.19937e-06}       & 	
\convert{2.6131e-06}        &	
\convert{4.339e-06}         &  
\convert{8.10799e-06}       &	
\convert{9.66135e-06}       &	
\convert{7.04694e-06}       &	
\convert{2.8047e-06}	    \\
Closeness                   &
\convert{3.997e-06}         &
\convert{7.40967e-06}       &
\convert{2.93737e-06}       &
\convert{4.4322e-06}        &
\convert{6.34187e-06}       &
\convert{ 8.29841e-06}      &
\convert{5.1788e-06}        &
\convert{5.76866e-06}       &
\convert{3.54187e-06}       &
\convert{2.70617e-06}       \\
$Q_1$-\cite{GraspIt}&
\convert{7.94267e-07}&
\convert{3.99198e-06}&
\convert{8.02763e-06}&
\convert{4.90842e-06}&
\convert{4.30632e-07}&
\convert{2.99829e-06}&
\convert{4.15036e-06}&
\convert{7.45751e-06}&
\convert{7.61735e-06}&
\convert{2.67667e-07}\\
\bottomrule
\end{tabular}
\caption{\small{\label{table:quality} A comparison of grasp quality ($Q_\infty$) using different algorithms on Barrett Hand (top row) and Shadow Hand (bottom row). From top to bottom: our method, differentiable grasp planner \cite{Liu2020DeepDG} guided by sub-gradients, our method with objective replaced by closeness measure, and EigenGrasp \cite{GraspIt} using $Q_1$ objective function.}}
\vspace{-5px}
\end{table*}
\subsection{L2L Step}
After substituting the center of expansion from $b$ to $c$, L2L step evaluates $G^d$ around some target point $x$ contained in a box, $\mathcal{B}_x$, with center point $b$ using (\prettyref{eq:M2L} and \prettyref{eq:M2LG}). In summary, the cost of evaluating each $G^d$ is $O(N+M)$ by setting $\mathcal{S}(y)=(\pi r^2)^2$. To evaluate $\FPPR{G^d}{\theta}$, we assume that the rigid object is an articulated body so that $y(\theta)=R(\theta)y^l+t(\theta)$ where $R(\theta),t(\theta)$ are the rotation and translation of a rigid link, and $y^l$ is the point $y$ in local coordinates of the robot link. By the chain rule, we have:
\small
\begin{equation}
\label{eq:L2LG}
\begin{aligned}
&\FPP{G^d}{\theta}=\FPP{G^d}{\TWO{R}{t}}\FPP{\TWO{R}{t}}{\theta}\\
&\FPP{G^d}{t_j}=\FPP{G^d}{y_j}\quad\FPP{G^d}{R_{ij}}=\FPP{G^d}{y_i}y_j^l.
\end{aligned}
\end{equation}
\normalsize
We first evaluate $\FPP{G^d}{\TWO{R}{t}}$ and then multiple by $\FPP{\TWO{R}{t}}{\theta}$. Each evaluation of $\FPP{G^d}{t_j}$ can be performed using FGT by setting $\mathcal{S}(y)=(\pi r^2)^2$, and each evaluation of $\FPP{G^d}{R_{ij}}$ can be performed by setting $\mathcal{S}(y)=(\pi r^2)^2 y_j^l$. Using the articulated body algorithm \cite{featherstone2014rigid}, the multiplication by $\FPP{\TWO{R}{t}}{\theta}$ incurs $\mathcal{O}(|\theta|)$. Altogether, the cost of evaluating $G^d,\FPP{G^d}{\theta}$ is $\mathcal{O}(13(N+M)+|\theta|)$ and the cost of evaluating all the constraint gradients is $\mathcal{O}((13(N+M)+|\theta|)D)$. We further notice that M2M and M2L steps are irrelevant to the $D$ wrench directions and need to be done only once, so the ultimate cost is: $\mathcal{O}(13(N+MD)+|\theta|D)$. We summarize FGT in \prettyref{alg:FGT}.
\section{Results}
To validate the effectiveness of our approach, we employ a small dataset (\prettyref{fig:data}) containing 20 models from the Thingi10k object dataset \cite{thingy}, which is divided into two groups. The first group of 10 objects are to be grasped using the (6+4)-DOF three-fingered Barrett Hand \cite{barretthand} and the second group is to be grasped using the (6+22)-DOF Shadow Hand \cite{shadowhand}. All experiments are carried out on a machine with 2.3 GHz 8-Core Intel Core i9 CPU. For all the experiments, we choose $D=128, \alpha=10^{-3}, \gamma=0.1, \beta=0.5, c=0.1, \tau=10^{-10}$. We choose $n_0$ to ensure FMM approximation error is less than $10^{-6}$ according to \cite{spivak2010fast}.

\textbf{Robustness:} Our algorithm successfully processed the entire dataset, where the objects exhibit high geometrical and topological complexities including both thin and tiny features that are oftentimes challenging in terms of collision-avoidance and contact point selection. However, our method can find human-like solutions (red poses in \prettyref{fig:data}) from trivial initializations (blue poses in \prettyref{fig:data}). The grasp quality optimized using different algorithms are summarized in \prettyref{table:quality} (Larger numbers in \prettyref{table:quality} indicate better quality and all the numbers have small absolute values due to scaling of objects). As compared with GraspIt \cite{GraspIt}, our method achieves Min/Average/Max $Q_{\infty}$ improvement rate of $0.78/402.30/3292.72$ on the BarrettHand and $1.33/9.07/34.37$ on the ShadowHand. This is the first time for model-based, optimization-based grasp planners to generate results of this level of complexity.

\textbf{Comparisons:} We have also compared our method with two prior gradient-based grasp planner. The first method is our prior work \cite{Liu2020DeepDG}, where we use sub-gradients of the $Q_1$ metric to optimize grasp poses. The second method uses the closeness energy as objective function, which minimizes the distance between point on grippers and object surfaces. The closeness energy has also been used by \cite{GraspIt}. Note that we compare all these methods in terms of the $Q_\infty$ metric for fairness. According to \prettyref{table:quality}, our method significantly outperforms both these methods. We found that the method in \cite{Liu2020DeepDG} requires a near-optimal initial guess and they rely on groundtruth data to derive initial guesses. By starting from trivial initial guesses as in \prettyref{fig:data}, sub-gradients cannot find meaningful grasps. On the other hand, the closeness energy does not consider force equilibrium condition.

\textbf{FMM Acceleration:} In \prettyref{fig:fmm} we plot the averaged iteration cost of SQP, with and without FMM acceleration. The accelerated SQP solver achieves up to $5.6\times$ speedup as compared with brute-force summation at the highest density of sampled contact points. The use of FMM never deteriorate the quality of planned grasps, achieving almost identical results as compared with brute-force summation as illustrated in \prettyref{fig:fmmQuality}. For reference, we plot the $Q_\infty$ metric computed via brute-force summation for small densities, because the cost for larger densities. We also observe improved optimized qualities when using a higher density, which will ultimately converge.
\begin{figure}
\centering
\vspace{-5px}
\includegraphics[width=\linewidth]{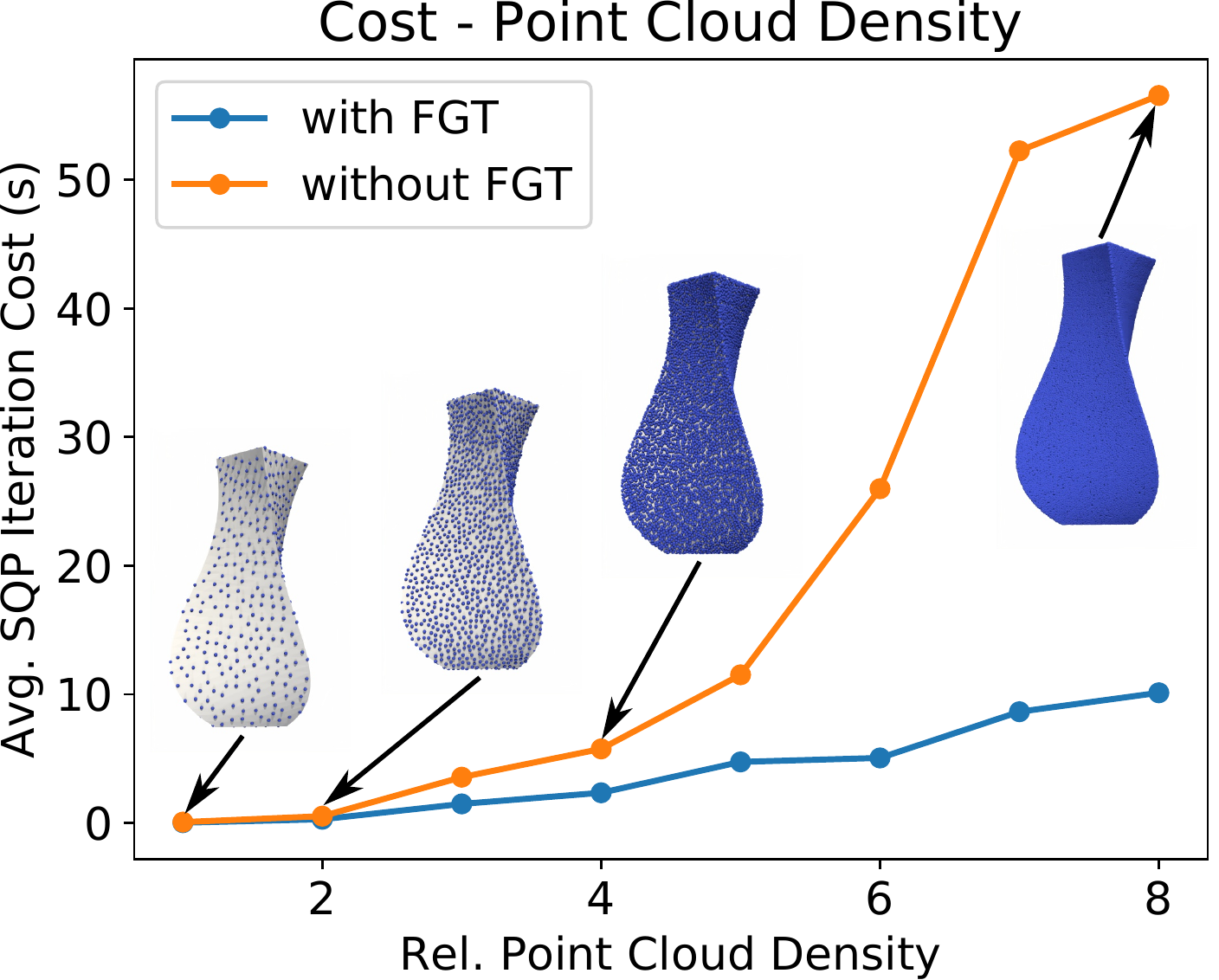}
\caption{\label{fig:fmm} The averaged computational cost of each SQP iteration plotted against the relative sample density. FMM acceleration achieves up to $5.6\times$ speedup over brute-force summation at the highest sample density.}
\vspace{-5px}
\end{figure}
\begin{figure}
\centering
\includegraphics[width=\linewidth]{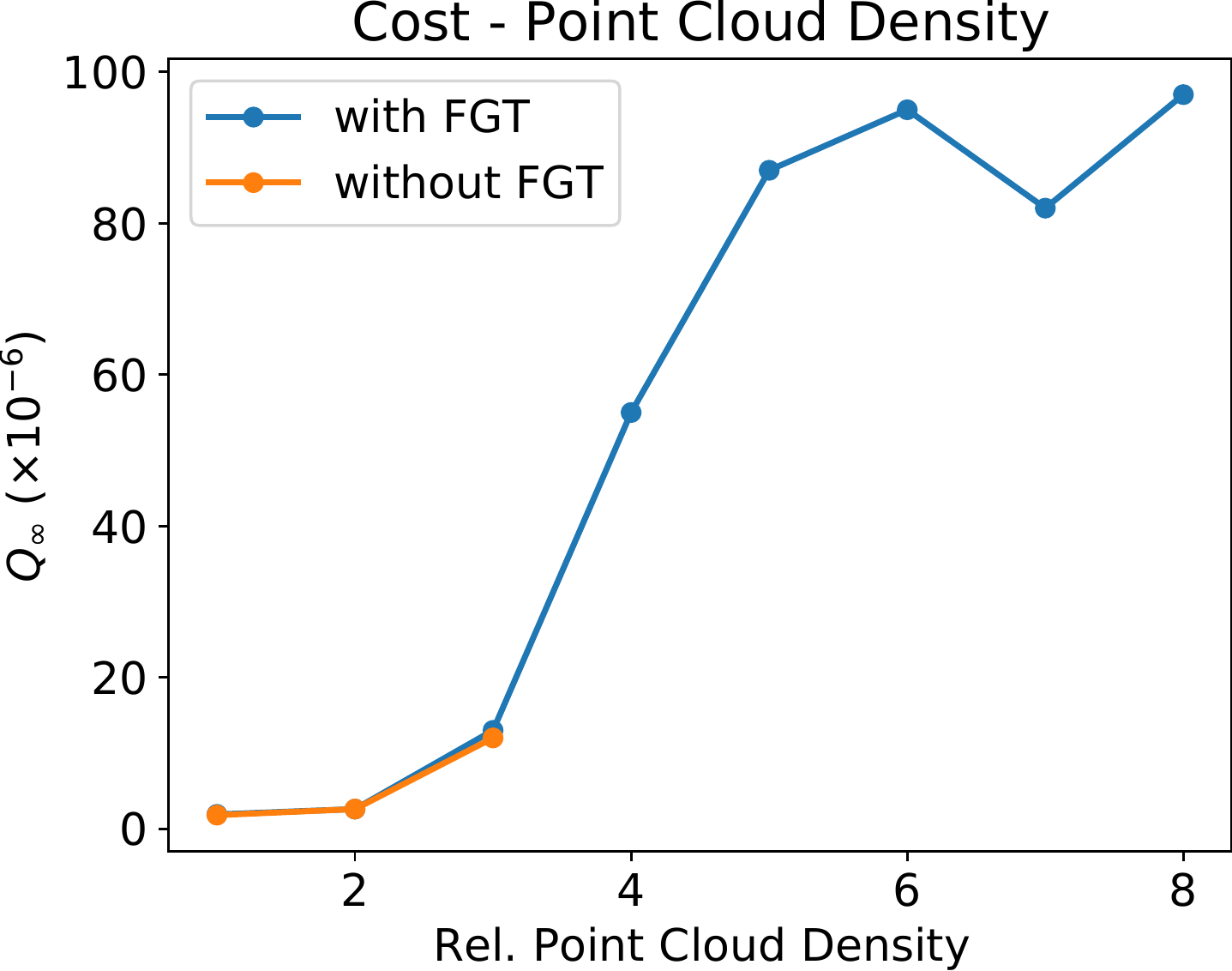}
\caption{\label{fig:fmmQuality} The averaged grasp quality plotted against the relative sample density. FMM acceleration generates almost identical $Q_\infty$ metric values as compared with brute-force summation.}
\vspace{-15px}
\end{figure}
\section{Conclusion \& Future Work}
We present a full-featured, model-based, differentiable grasp planner that can plan both precision and power grasps. We first establish the connection between grasp planning and contact-implicit path planning, which takes the form of an IPCC. We further show that IPCC can be rewritten as an NLP via the kernel-integral relaxation. Finally, we propose a SQP-based practical algorithm to solve the NLP, where the kernel-integral is approximately and efficiently evaluated using FMM. Our method achieves a higher level of generality in terms of 3D object types and gripper types, and we provide guaranteed (self-)collision-free results. In the future, we plan to apply our method to the training of robust, real-time grasp policies as in \cite{Liu2020DeepDG}. Our method can only find locally optimal grasps, and we plan to integrate our method with a stochastic global optimizer, such as Bayesian optimization \cite{nogueira2016unscented}, which can also handle uncertainties in object shapes.
\printbibliography
\section*{Appendix: SQP Optimizer}
We provide our main algorithm, which is a simplified, line-search-based SQP optimizer. We observe from \prettyref{eq:optCons} that we can always reduce $Q$ to satisfy the constraints $Q\leq G_d(\theta)$. Therefore, the underlying QP subproblem is always feasible and we do not need to use any feasibility relaxation. We assume the following exact $l_1$-merit function:
\begin{align}
\label{eq:merit}
\phi(\theta,r)=L(\theta)-Q+\rho\sum_d|\min(0,G_d(\theta)-Q)|,
\end{align}
and we assume following QP subproblem using approximate positive-definite Hessian $H$:
\begin{equation}
\begin{aligned}
\label{eq:QP}
\argmin{\Delta\theta,\Delta Q}\;&
\frac{1}{2}\Delta\theta^TH\Delta\theta+
\TWO{\Delta\theta^T}{\Delta Q^T}\TWOC{\FPP{L}{\theta}}{-1}\\
\ST\;&Q+\Delta Q\leq G_d(\theta)+\FPP{G_d}{\theta}\Delta\theta.
\end{aligned}
\end{equation}
This problem must be feasible using sufficient small $\Delta Q$. The size of matrix $H$ is small, typically less than $10\times10$, so we use eigen-decomposition and clamp the negative eigenvalues below $10^{-6}$ to ensure positive definiteness. The directional derivative of $\phi$ along $\TWO{\Delta\theta^T}{\Delta Q^T}$ is:
\begin{small}
\begin{align*}
D\phi(\theta,r)
\leq\TWO{\Delta\theta^T}{\Delta Q^T}\TWOC{\FPP{L}{\theta}}{-1}-
\rho\sum_d|\min(0,G_d(\theta)-Q)|.
\end{align*}
\end{small}
To ensure that the directional derivative to be negative, we can choose:
\begin{equation}
\begin{aligned}
\label{eq:meritIncrease}
\rho\geq\frac{\TWO{\Delta\theta^T}{\Delta Q^T}\TWOC{\FPP{L}{\theta}}{-1}}
{(1-\gamma)\sum_d|\min(0,G_d(\theta)-Q)|}\quad\gamma\in(0,1).
\end{aligned}
\end{equation}
The final SQP algorithm for grasp planning is illustrated in \prettyref{alg:SQP}. Note that the optimization of the separating planes ${P^{pq}}^k$ are not included in the SQP framework. Instead, we update them in an alternating manner after each iteration. This treatment makes each iteration efficient and keep the Hessian matrix to have a small, fixed size. On the downside, the convergence speed degrades from second- to first-order, but the practical performance is satisfactory according to our experiments. 
\begin{algorithm}
\caption{\label{alg:SQP}SQP for Grasp Planning}
\begin{algorithmic}[1]
\Require{Initial $\theta^0, Q^0, {P^{pq}}^0, \gamma,\beta,c\in(0,1), \rho^0,\tau>0$}
\Ensure{Locally optimal $\theta$ to \prettyref{eq:optCons}}
\State $\phi^0\gets\phi(\theta^0,Q^0)$
\For{Iteration $k=1,2,\cdots$}
\State Use \prettyref{alg:FGT} to compute $G^d,\FPPR{G^d}{\theta}$
\State Solve \prettyref{eq:QP} for $\TWO{\Delta^k \theta}{\Delta^k Q}$
\State Increase $\rho^{k-1}$ to $\rho^k$ to ensure \prettyref{eq:meritIncrease}
\State $\Theta\gets1$\Comment{Line search}
\While{true}
\State $\theta^k\gets\theta^{k-1}+\Theta\Delta^k \theta$
\State $Q^k\gets Q^{k-1}+\Theta\Delta^k Q$
\State $\phi^k\gets \phi(\theta^k,Q^k)$
\If{$\phi^k\leq\phi^{k-1}+c\Theta D\phi(\theta^{k-1},Q^{k-1})$}
\State Break
\Else
\State $\Theta\gets\beta\Theta$
\EndIf
\EndWhile
\For{$1\leq p<q\leq L$}\Comment{Update separating plane}
\State ${P^{pq}}^k\gets
\argmin{R^{pq}\in\mathcal{SO}(3),n_0^{pq}}L_r(\theta^k,n^{pq},n_0^{pq})$
\EndFor
\If{$\left\|\FOURC{\Delta^k \theta}{\Delta^k Q}
{{n^{pq}}^k-{n^{pq}}^{k-1}}
{{n_0^{pq}}^k-{n_0^{pq}}^{k-1}}\right\|<\tau$}
\State Return $\theta^k$
\EndIf
\EndFor
\end{algorithmic}
\end{algorithm}
\end{document}